\theoremstyle{plain}
\newtheorem{theorem}{Theorem}[section]
\theoremstyle{definition}
\theoremstyle{remark}
\newcommand{\aalpha}{\boldsymbol{\alpha}}
\newcommand{\bbeta}{\boldsymbol{\beta}}
\newcommand{\mmu}{\boldsymbol{\mu}}
\newcommand{\Pphi}{\boldsymbol{\Phi}}
\newcommand{\SSigma}{\boldsymbol{\Sigma}}
\newcommand{\E}{\mathbb{E}}
\newcommand{\R}{\mathbb{R}}
\newcommand{\VV}{\boldsymbol{V}}
\newcommand{\WW}{\boldsymbol{W}}
\newcommand{\bb}{\boldsymbol{b}}
\newcommand{\cc}{\boldsymbol{c}}
\newcommand{\xx}{\boldsymbol{x}}
\newcommand{\yy}{\boldsymbol{y}}
\newcommand{\zz}{\boldsymbol{z}}
\icmltitlerunning{Converting MLPs into Polynomials}
\begin{document}

\twocolumn[
\icmltitle{Converting MLPs into Polynomials in Closed Form}



\icmlsetsymbol{equal}{*}

\begin{icmlauthorlist}
\icmlauthor{Nora Belrose}{eai}
\icmlauthor{Alice Rigg}{eai}
\end{icmlauthorlist}

\icmlaffiliation{eai}{EleutherAI}

\icmlcorrespondingauthor{Nora Belrose}{nora@eleuther.ai}

\icmlkeywords{Machine Learning, ICML}

\vskip 0.3in
]



\printAffiliationsAndNotice{}  

\begin{abstract}
Recent work has shown that purely quadratic functions can replace MLPs in transformers with no significant loss in performance, while enabling new methods of interpretability based on linear algebra. In this work, we theoretically derive closed-form least-squares optimal approximations of feedforward networks (multilayer perceptrons and gated linear units) using polynomial functions of arbitrary degree. When the $R^2$ is high, this allows us to interpret MLPs and GLUs by visualizing the eigendecomposition of the coefficients of their linear and quadratic approximants. We also show that these approximants can be used to create SVD-based adversarial examples. By tracing the $R^2$ of linear and quadratic approximants across training time, we find new evidence that networks start out simple, and get progressively more complex. Even at the end of training, however, our quadratic approximants explain over 95\% of the variance in network outputs.

\end{abstract}

\section{Introduction}

In the field of mechanistic interpretability, it is well-understood that MLP neurons tend to be \emph{polysemantic} in the sense that they activate on a set of diverse, seemingly unrelated contexts \cite{elhage2022toy}. To address this, the current paradigm is dictionary learning: training wide, sparsely activating MLPs that minimize mean squared-error with respect to the original activations. While sparse autoencoders (SAEs) can be used to extract interpretable features from MLP activations and outputs \cite{huben2023sparse, paulo2024automatically}, 
it is not understood how much they are learning features of the model versus features of the data. 

How can we learn features of the model itself, making minimal assumptions about the input data? We propose to assume the data distribution is maximum entropy, subject to low-order constraints on its statistical moments. Famously, for vector-valued data with known mean $\mmu$ and covariance matrix $\SSigma$, this is the Gaussian distribution $\mathcal{N}(\mmu,\SSigma)$. 



In this paper, we show how to \emph{analytically convert} pretrained MLPs and GLUs into polynomials that globally minimize mean squared error (MSE) over the maximum entropy distribution $\mathcal{N}(\mmu,\SSigma)$. On high dimensional data, this process is tractable for linear and quadratic approximants, which we show is often sufficient to interpret networks trained on simple image classification data such as MNIST. 


We use these polynomial approximants to shed new light on the inductive biases of neural network architectures. Neural networks have been speculated and confirmed to varying extents that they learn lower order statistics first before moving onto higher orders, a concept termed the (distributional) simplicity bias (DSB) \citep{refinetti2023neural, belrose2024neural}. Using our analytic derivation, we test the DSB hypothesis explicitly by measuring how well the least-squares linear or quadratic function approximates a neural network at different stages of training.

Specifically, we measure the fraction of variance unexplained (FVU) of our linear and quadratic approximants across training time, and uncover a phase transition during training where the FVU of the approximant starts to rise sharply, while the quadratic FVU stays nearly constant. We interpret this as indicating a shift from learning linear features to learning nonlinear, largely quadratic features. This observation is broadly consistent with the DSB hypothesis.

We also demonstrate that our linear approximants can be used to generate adversarial examples for the original network, showing that they effectively capture the out-of-distribution behavior of the network they are fit to.


\section{Background}


Multi-layer perceptrons (MLPs) have the functional form 
\begin{align}
    f(\xx) &= \phi(\xx \WW_1 + \bb_1) \WW_2 + \bb_2
\end{align} where $\phi$ is a nonlinear activation function applied pointwise. Common choices for the activation function are ReLU and GELU \citep{hendrycks2016gaussian}.

However, many recent transformers use variants of the Gated Linear Unit (GLU) rather than the MLP after each attention block \cite{shazeer2020glu}, with the functional form
\begin{equation}\label{eq:glu}
   \mathrm{GLU}_\phi(\xx, \WW, \VV, \bb, \cc) = \phi(\xx \WW + \bb) \odot (\xx \VV + \cc)
\end{equation}
where $\odot$ denotes elementwise multiplication.  Setting $\phi$ to the identity function yields the the simple ``bilinear'' layer \cite{mnih2007three, dauphin2017language}, which has similar performance to GLUs with nonlinearities like Swish (SwiGLU) or ReLU (ReGLU).

Recently, \citet{pearce2024bilinear}  showed that each output unit of the bilinear layer is a quadratic form in the input, and this fact enables mechanistic interpretability via eigendecomposition of the associated Hessians, which they call ``interaction matrices.''
For each target class $k$, the spectral theorem gives us that an orthonormal set of eigenvectors $\{\textbf{v}_i\}_{i=1}^d$ exists, and
\begin{align*}
    \textbf{e}_k \cdot \mathcal{B} = Q = \sum_i^d \lambda_i \textbf{v}_i \textbf{v}_i^T.
\end{align*}
The highest magnitude eigenvectors of these matrices are often interpretable by direct inspection and can be used for steering and adversarial attacks.

The above approach to interpreting neural networks involves starting with a bilinear or quadratic architecture for the base model, which is an unconventional design choice. We introduce the decomposition here, because our quadratic approximants have tensor parameters amenable to the same spectral decomposition. 

\section{Derivation}
\label{ssec:derivation}

While stochastic gradient descent could be used to estimate polynomial coefficients that approximate a neural network on an arbitrary input distribution, this can be computationally intensive, and does not afford a deeper theoretical understanding of the network's inductive biases. In this section we show that, surprisingly, it is possible to derive \emph{analytic formulas} for these polynomial approximants when the input is assumed to be drawn from some Gaussian mixture. We start with the single Gaussian case, and extend to the case of general Gaussian mixtures in Section~\ref{sec:mixture}.

\subsection{Linear case}
For simplicity, assume that our approximant is affine, taking the form $g(\xx) = \bbeta^T \xx + \aalpha$. Then our problem reduces to ordinary least squares (OLS):
\begin{equation}
    \mathop{\mathrm{argmin\:}}_{(\bbeta, \aalpha)} \E_{\xx} \| f(\xx) - (\bbeta^T \xx + \aalpha) \|^2_2
\end{equation}
which is known to have the solution:
\begin{align}
    \bbeta &= \mathrm{Cov}[\xx]^{-1} \mathrm{Cov}[f(\xx), \xx]\\
    \aalpha &= \E[f(\xx)] - \bbeta^T \E[\xx]
\end{align}
In order to compute these coefficients in closed form, we must analytically evaluate the integrals $\E[f(\xx)]$ and $\mathrm{Cov}[f(\xx), \xx]$. Our key insight is that this is possible when the input distribution is Gaussian, and $f$ takes the form $f(\xx) = \mathbf{W}_2 \phi( \mathbf{W}_1 \xx + \mathbf{b}_1 ) + \mathbf{b}_2$, where $\phi$ is an elementwise nonlinearity.

Given that $\xx \sim \mathcal{N}(\mmu, \SSigma)$ for some $\mmu \in \R^d$, $\SSigma \in \mathbb{S}^d_+$, the pre-activations $\yy := \mathbf{W}_1 \xx + \mathbf{b}_1$ will also be Gaussian with mean $\mathbf{W}_1 \mmu + \mathbf{b}_1$ and covariance $\mathbf{W}_1 \SSigma \mathbf{W}_1^T$. Then each coordinate of the post-activations $\phi(\yy)$ has a mean which can be evaluated analytically when $\phi$ is $\mathrm{ReLU}$ or $\mathrm{GELU}$ (Appendix~\ref{app:relu}), or approximated to high precision using \href{https://en.wikipedia.org/wiki/Gauss%E2%80%93Hermite_quadrature}{Gauss-Hermite quadrature} otherwise. Given $\E[\phi(\yy)]$, we can simply apply the linearity of expectation to compute $\E[f(\xx)] = \mathbf{W}_2 \E[\phi(\yy)] + \mathbf{b}_2$.

Evaluating $\mathrm{Cov}[f(\xx), \xx] = 
\E[f(\xx) \xx^T] - \E[f(\xx)] \E[\xx]^T$ is somewhat more involved. Again applying linearity we have that $\E[f(\xx) \xx^T] = \mathbf{W}_2 \E[\phi(\yy) \xx^T] + \mathbf{b}_2 \E[\xx]^T$. Since $\xx$ and $\yy$ are jointly Gaussian, we can apply \href{https://en.wikipedia.org/wiki/Stein%27s_lemma}{Stein's lemma} to write the $(i, j)$\textsuperscript{th} entry of the first term as
\begin{equation}
    \E[\phi(y_i) x_j] = \mathrm{Cov}(y_i, x_j) \E[\phi'(y_i)]
\end{equation}
where $\phi'$ is the first derivative of $\phi$. Since $\mathrm{ReLU}$ and $\mathrm{GELU}$ have simple derivatives with analytically tractable Gaussian expectations, we can compute $\E[\phi'(y_i)]$ in closed form.


\subsection{Quadratic case}

We can reduce the case where $g$ is a polynomial of degree $n > 1$ to an OLS problem using a classic change of variables strategy. Consider the quadratic feature map
\begin{align}\label{eq:quadratic-map}
    \phi_2(\xx) 
&= \{\, x_i x_j \mid 1 \le i \le j \le n\ \}\\
&= \bigl( x_1^2,\; x_1 x_2,\; \dots,\; x_n^2 \bigr).
\end{align}
Let $\zz$ denote the concatenation of $\xx$ and $\phi_2(\xx)$. Now any quadratic function of $\xx$ can be expressed as an affine function of $\zz$, allowing us to estimate the coefficients of the least-squares quadratic in $\xx$ using OLS on $\zz$. This strategy can be applied, \emph{mutatis mutandis}, to polynomial function classes of arbitrary degree.

Solving the OLS problem in $\zz$ requires evaluating the cross-covariance matrix $\mathrm{Cov}[f(\xx), \zz]$ whose $(i, j)$\textsuperscript{th} entry is a Gaussian integral of the form $\E[\phi(y_i) x_k x_l]$, where $y_i, x_k, x_l$ are jointly Gaussian, and $\phi$ is some nonlinearity. We show how this multivariate integral can be decomposed into a linear combination of univariate integrals in Theorem~\ref{thm:master}.

We will also need to evaluate $\mathrm{Cov}[\zz]$, a matrix containing various higher moments of jointly Gaussian variables. This can be done using the formula
\begin{align*}
    \mathbb{E}&\left[ X_1 X_2 \cdots X_n \right] =\\ &\sum_{\text{partitions } (S, P)} \left( \prod_{s \in S} \mu_s \right) \left( \prod_{(i,j) \in P} \text{Cov}(X_i, X_j) \right),
\end{align*}
where the sum ranges over all possible partitions of the variables into singleton sets $S$ and pairs $P$. This generalizes \citet{isserlis1918formula}'s theorem to noncentral Gaussian variables, and can be derived from the \href{https://en.wikipedia.org/wiki/Cumulant}{cumulant-moment identity}.

\subsection{Master theorem}

The following theorem allows us to reduce a multivariate integral of the form $\E [g(X) \prod_{i = 1}^{n} Y_i ]$, where the variables are all jointly Gaussian, into a linear combination of univariate integrals of the form $\E[g(X) X^k]$. We can then efficiently evaluate these integrals using numerical integration techniques or closed form formulas (Appendix~\ref{app:relu}).

\begin{figure*}[t]
    \centering
    \includegraphics[width=0.9\textwidth]{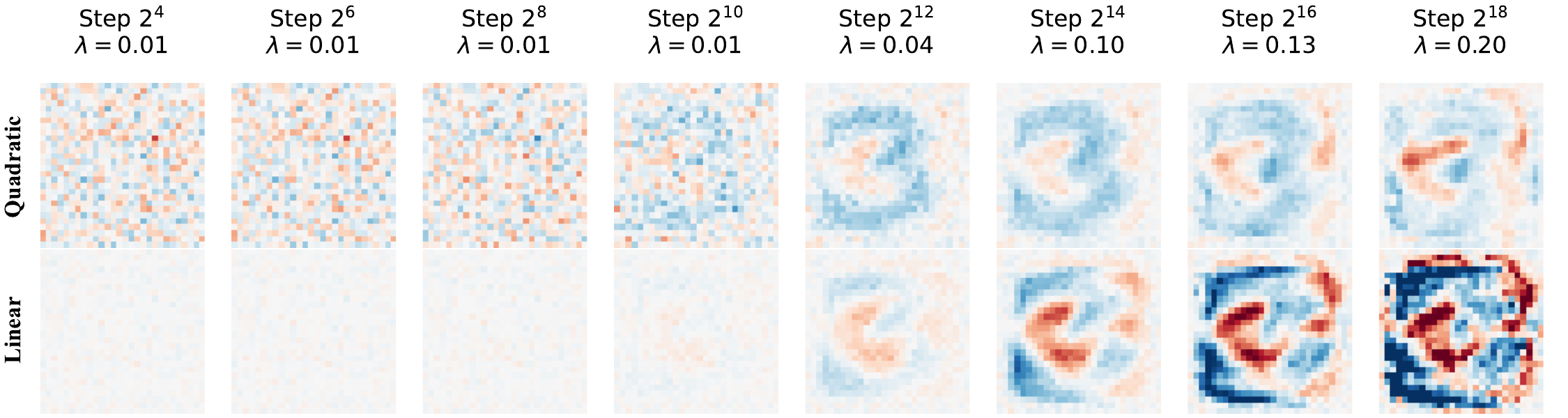}
    \caption{Quadratic and linear features for class `3', over the course of training. Discernible `3' qualities arise from noise for both quadratic and linear features, aligning with the region where learning is happening. The linear 3 structure is most intuitively discernible at step (y), when FVU is minimal, before beginning to overfit. This can be interpreted as the MLP learning and relying on statistics of higher complexity than linear, especially if its accuracy continues to improve. The quadratic feature crystallizes later than linear, and predictably forms visual artifacts at the latest stages of training.}
    \label{fig:qualitative}
\end{figure*}

\begin{theorem}[Master Theorem]\label{thm:master}
Let \( X, Y_1, \ldots, Y_n \) be \( n+1 \) jointly Gaussian random variables, and let \( g : \mathbb{R} \rightarrow \mathbb{R} \) be a continuous, real-valued function. Then:
\begin{align}
    \E \Big [g(X) \prod_{i = 1}^{n} Y_i \Big ] = \sum_{k = 0}^{n - 1} a_k \E [ g(X) X^k ],
\end{align}
where the coefficients $a_k$ can be computed analytically in the manner described below.
\end{theorem}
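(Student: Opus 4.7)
The plan is to exploit the jointly Gaussian structure by decomposing each $Y_i$ into its linear regression onto $X$ plus a residual independent of $X$. Concretely, setting $c_i = \mathrm{Cov}(Y_i, X)/\mathrm{Var}(X)$, $\mu_X = \E[X]$, and $\mu_i = \E[Y_i]$, I would write $Y_i = \mu_i + c_i(X - \mu_X) + Z_i$, where $Z_i := Y_i - \mu_i - c_i(X - \mu_X)$. By construction $\mathrm{Cov}(Z_i, X) = 0$ and $\E[Z_i] = 0$. Since $(X, Z_1, \ldots, Z_n)$ is jointly Gaussian (an affine image of the original Gaussian vector), uncorrelatedness of $X$ with each $Z_i$ upgrades to full independence of $X$ from the vector $(Z_1, \ldots, Z_n)$.

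Next, I would expand $\prod_{i=1}^n Y_i$ as a sum of terms of the form $\gamma \cdot (X - \mu_X)^k \cdot \prod_{j \in S} Z_j$, where $k \ge 0$, $S \subseteq \{1,\ldots,n\}$, and $\gamma$ is a constant built from the $c_i$ and $\mu_i$. Taking expectation against $g(X)$ and using independence of $X$ from the $Z_j$, each term factors as $\gamma \cdot \E[g(X)(X-\mu_X)^k] \cdot \E\bigl[\prod_{j \in S} Z_j\bigr]$. The purely-Gaussian moment $\E[\prod_{j \in S} Z_j]$ is then computable by Isserlis' theorem applied to the centered residuals, whose covariance matrix is the Schur complement $\mathrm{Cov}(Y_i,Y_j) - c_i c_j \mathrm{Var}(X)$ (i.e., the conditional covariance of the $Y$'s given $X$).

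Finally, expanding $(X - \mu_X)^k$ in the monomial basis and collecting by the power of $X$ yields a linear combination of the univariate integrals $\E[g(X) X^k]$, giving the claimed identity with $a_k$ expressible as a combinatorial sum over subset-pairings of $\{1,\ldots,n\}$ weighted by products of the $c_i$, the constants $\mu_i - c_i\mu_X$, and pairwise residual covariances. The main obstacle is purely bookkeeping: organizing the multinomial expansion so that $a_k$ emerges in a clean closed form, rather than the probabilistic content itself, which follows immediately from the fact that for jointly Gaussian random variables, orthogonal projection onto $X$ produces a residual that is \emph{independent} of $X$, not merely uncorrelated.
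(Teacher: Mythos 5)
Your proposal is correct and follows essentially the same route as the paper's proof: decompose each $Y_i$ via its linear regression onto $X$ into $\alpha_i + \beta_i X + \epsilon_i$ with a residual that is independent (not merely uncorrelated) of $X$ by joint Gaussianity, expand the product, factor the expectations, and evaluate the residual moments with Isserlis' theorem using the conditional (Schur-complement) covariance. The only cosmetic difference is that you center the regression at $\mu_X$ and so need an extra expansion of $(X-\mu_X)^k$ into monomials, whereas the paper works directly with $\alpha_i + \beta_i X$.
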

\begin{proof}
We begin by rewriting the $Y_i$ in terms of their conditional expectations on $X$:
\begin{align}\label{eq:setup}
    \E \Big [g(X) \prod_{i = 1}^{n} Y_i \Big ] &= \E \Big [ g(X) \prod_{i = 1}^{n} \big ( \E[Y_i | X] + \epsilon_i \big ) \Big ] \\ &= \E \Big [ g(X) \prod_{i = 1}^{n} \big ( \alpha_i + \beta_i X + \epsilon_i \big ) \Big ],
\end{align}
where $\alpha_i$ and $\beta_i$ are the ordinary least squares intercepts and coefficients, respectively, for regressing $Y_i$ on $X$.\footnote{Here we assume $\mathrm{Var}(X) > 0$, so that these coefficients are well-defined. If $\mathrm{Var}(X) = 0$, then $g(X)$ is almost surely constant, and hence we can write our expectation as $\E[g(X)] \E[\prod_{i = 1}^{n} Y_i]$ and apply Isserlis' theorem to evaluate $\E[\prod_{i = 1}^{n} Y_i]$.}

This leaves us with $g(X)$ times a polynomial in $X$. We will compute the coefficients of this polynomial using a combinatorial argument. First let $S$ be the set of $n$-tuples such that the $i$\textsuperscript{th} entry of each tuple is chosen from $\{ \alpha_i, \beta_i X, \epsilon_i \}$.
\begin{equation}
    S = \big \{ ( d_1, \ldots, d_n) \:|\: \forall i : d_i \in \{ \alpha_i, \beta_i X, \epsilon_i \} \big \}
\end{equation}
Let $\mathrm{prod}(s)$ denote the product of the elements of a tuple $s$. By the distributive property, we can expand our polynomial into a sum of $3^n$ terms, one for each element of $S$:
\begin{equation}\label{eq:distributive}
    \prod_{i = 1}^{n} \big ( \alpha_i + \beta_i X + \epsilon_i \big ) = \sum_{s \in S} \mathrm{prod}(s).
\end{equation}
Now fix some integer $k \leq n$. To compute the polynomial coefficient $a_k$ corresponding to $X^k$, we must sum together all the terms in Eq.~\ref{eq:distributive} which contain precisely $k$ factors drawn from $B = \{ \beta_1, \ldots \beta_n \}$. Let $C(B, k)$ denote the set of $k$-combinations (represented as sorted $k$-tuples) of the elements of $B$, so that $|C(B, k)| = \binom{n}{k}$,\footnote{By construction, the elements of the tuples in $S$ are sorted in ascending order by subscript, so there are only $\binom{n}{k}$ ways for an $\alpha$ factor to appear $k$ times in a term. Since multiplication is commutative, the actual ordering of elements in each tuple is a matter of indifference.} and let $I(c)$ denote the set of indices used in a combination $c$, for example $I ( ( \beta_2, \beta_4 ) ) = \{ 2, 4 \}$. Then we have
\begin{equation}\label{eq:poly-coef}
    a_k = \sum_{c \in C(B, k)} \mathrm{prod}(c) \sum_{s \in S_{-\beta}(c)}  \mathrm{prod}(s),
\end{equation}
where we define $S_{-\beta}(c)$ to be the set of tuples of length $n - k$ whose $i$\textsuperscript{th} element is drawn from $\{ \alpha_i, \epsilon_i \}$:
\begin{align}
    S_{-\beta}(c) = \big \{ ( d_1, \ldots, d_n) \:|\: \forall i \notin I(c) : d_i \in \{ \alpha_i, \epsilon_i \} \big \}
\end{align}
Plugging Eq.~\ref{eq:poly-coef} into Eq.~\ref{eq:setup} yields
\begin{align*}
    & \E \Big [g(X) \prod_{i = 1}^{n} Y_i \Big ] \\
    &= \E \Big [ \sum_{k = 0}^{n} \sum_{c \in C(B, k)} \mathrm{prod}(c)\!\!\! \sum_{s \in S_{-\beta}(c)}\!\!  \mathrm{prod}(s) g(X) X^k \Big ] \\
    &= \sum_{k = 0}^{n} \Big (\! \sum_{c \in C(B, k)} \mathrm{prod}(c)\!\!\! \sum_{s \in S_{-\beta}(c)}\!\!\! \E[ \mathrm{prod}(s) ] \Big ) \E \big [ g(X) X^k \big ]
\end{align*}
where we have pulled $\mathrm{prod}(c)$ out since it is a constant for each $c$, and we have pulled out $\E[ \mathrm{prod}(s) ]$ since $\mathrm{prod}(s)$ is either a constant (some product of $\alpha$ factors) or a random variable independent of $X$ (if it contains any $\epsilon$ factors).

In general, $\E[ \mathrm{prod}(s) ]$ is proportional to the expected product of a (possibly empty) set of zero mean, jointly Gaussian variables $\epsilon_j\in s$. It can be evaluated using Isserlis' theorem, which reduces to a sum of products of covariances between residuals. The covariance between two residuals $\epsilon_i$, $\epsilon_j$ is
\begin{align*}
    &\mathrm{Cov}(\epsilon_i, \epsilon_j) \\ &= \mathrm{Cov} \Big [ \alpha_i + \beta_i X - Y_i, \alpha_j + \beta_j X - Y_j \Big ] \\
    &= \mathrm{Cov}(Y_i, Y_j) \cancel{- \beta_i \mathrm{Cov}(X, Y_j) - \beta_j \mathrm{Cov}(X, Y_i)} + \beta_i \beta_j \mathrm{Var}(X) \\
    &= \mathrm{Cov}(Y_i, Y_j) - \mathrm{Cov}(X, Y_i) \mathrm{Cov}(X, Y_j) \mathrm{Var}(X)^{-1},
\end{align*}
using the fact that $\beta_i = \frac{\mathrm{Cov}(X, Y_i)}{\mathrm{Var}(X)}$ and $\beta_j = \frac{\mathrm{Cov}(X, Y_j)}{\mathrm{Var}(X)}$.

We are now in a position to derive a formula for fixed $n$. When $n = 2$, this simplifies to
\begin{align}
    \E \Big [g(X) Y_1 Y_2 \Big ] &= \beta_1 \beta_2 \E \Big [ g(X) X^2 \Big] \\ \nonumber
    &+ \big(\alpha_1 \beta_2 + \alpha_2 \beta_1 \big) \E \Big [g(X)X\Big ] \\ \nonumber
    &+ \alpha_1 \alpha_2 \mathrm{Cov}(\epsilon_1, \epsilon_2) \E \Big [g(X) \Big].
\end{align}
\end{proof}

\subsection{Gaussian mixture inputs}
\label{sec:mixture}

So far we have assumed the input takes on a Gaussian distribution. But in many cases it makes sense to model the input as being drawn from a mixture distribution with $k$ distinct components, perhaps corresponding to different class labels. Luckily, it turns out that we can extend the above derivation to the case where the input has a Gaussian mixture distribution by applying the law of total covariance. Recall that for any cross-covariance matrix $\SSigma_{XY}$ where $X$ and $Y$ follow a mixture distribution indexed by $Z \sim \mathrm{Cat}(k)$, we have
\begin{equation}\label{eq:total-cov}
    \SSigma_{XY} = \mathbb{E}[\SSigma_{XY \mid Z}] + \SSigma_{\mathbb{E}[X \mid Z], \mathbb{E}[Y \mid Z]}
\end{equation}
where $\SSigma_{\mathbb{E}[X \mid Z], \mathbb{E}[Y \mid Z]}$ denotes the cross-covariance matrix of the conditional means of $X$ and $Y$. By setting $Y = X$, we can use Eq.~\ref{eq:total-cov} to solve for the covariance matrix of the inputs, and then by setting $Y = f(X)$ we can solve for the cross-covariance matrix of the input and the MLP output, using the analytic formulas we've already derived.

\subsection{Gated linear units}

We can now efficiently compute polynomial approximations to gated linear units (GLUs) as well as MLPs. It's easy to see from Eq.~\ref{eq:glu} that, given Gaussian inputs, the preactivations $\yy := \xx W + b$ and $\zz := \xx V + c$ are jointly Gaussian. Each component of $\E[\mathrm{GLU}_\phi(\xx, W, V, b, c)]$ is then an integral of the form $\E[\phi(Y_i) Z_i]$, which can be evaluated with Stein's lemma. Each entry of the cross-covariance matrix takes the form $\E[\phi(Y_i) Z_i X_j]$, allowing us to apply Theorem~\ref{thm:master}. Similar arguments apply, \emph{mutatis mutandis}, to higher-order polynomial approximants.

\section{Neural Networks Learn Polynomials of Increasing Degree}\label{sec:functional-complexity}

Prior work suggests that, across training time, the complexity of the function represented by a neural network tends to increase \citep{nakkiran2019sgd}, and networks tend to exploit statistical moments in increasing order as training progresses \citep{belrose2024neural}. In this section, we apply the polynomial approximation machinery derived above to the following question: \emph{Do neural networks learn polynomial functions of increasing degree?}

Specifically, our hypothesis is that the $R^2$ of the least-squares linear approximation to an MLP should start out high, and decrease nearly monotonically with training time. Then, at some point \emph{after} the $R^2$ of the linear approximant starts to decrease, we should see the $R^2$ of the least-squares quadratic approximant to decrease, again monotonically.

\subsection{Methods}

We consider the setting of image classification, although in principle the derivation is domain-agnostic. We train a single hidden layer MLP with ReLU activation on the MNIST dataset \citep{lecun1998gradient} using schedule-free AdamW \citep{defazio2024road} with 1K warmup steps, a batch size of $64$, and weight decay of $0.1$, saving checkpoints at log-spaced intervals. MNIST is an ideal dataset for this task since it is known to be very well-modeled as a Gaussian mixture distribution. Samples from a Gaussian mixture fit to MNIST are difficult for a human to distinguish from real samples \citep[Figure 2]{belrose2023leace}.

For each checkpoint, we fit least-squares linear and quadratic approximants under the assumption of Gaussian mixture inputs whose means and covariances match those of the MNIST classes.\footnote{For the quadratic approximants, we found that the analytic solution for Gaussian mixture inputs is computationally intractable on MNIST. We therefore fit the approximant under the assumption of $\mathcal{N}(0, 1)$ inputs and finetune it on Gaussian mixture samples using SGD. This accounts for the noise visible in the plots for quadratic FVU. For implementation details, see Appendix~\ref{app:quadratic-approx}.} We report the fraction of variance unexplained (FVU) of these checkpoints in Figure~\ref{fig:fvu_comparison}.


\subsection{Quantitative results}

Between 500 and 1K training steps, there is a clear phase transition where the FVU and KL divergence for the linear approximant start to increase sharply, and continue to do so until around 4K steps. Meanwhile, the FVU for the quadratic approximant is nearly constant over this same period. We interpret this as a ``quadratic phase'' of training wherein the network learns to exploit second-order statistical information in the input.

Interestingly, we also find that in the first few hundred steps of training, the FVU for both linear and quadratic approximants goes \emph{down}, indicating that the network is actually getting simpler during this time. While unexpected, this makes some amount of sense: while randomly initialized neural networks tend to be simple \citep{teney2024neural}, it may make sense for SGD to eliminate noise from the network early in training before making it more complex. On the other hand, we find that this phase disappears when we use KL divergence instead of FVU to measure the discrepancy between the network and its polynomial approximants (\cref{fig:kl_comparison}).




\begin{figure}[t]
    \centering
    \includegraphics[width=0.45\textwidth]{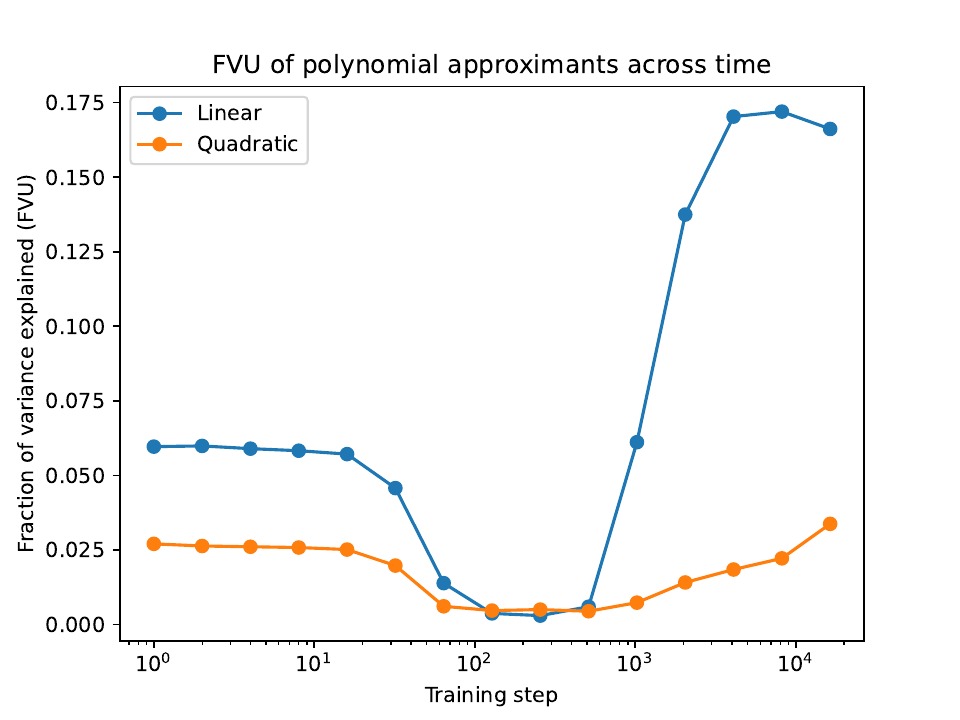}
    \caption{Fraction of variance unexplained for linear and quadratic approximants on a Gaussian mixture distribution imitation the MNIST training set. There is a sharp increase in the linear FVU between 500 and 1K training steps, while the quadratic FVU is roughly constant over the same time period.}
    \label{fig:fvu_comparison}
\end{figure}
\begin{figure}[h]
    \centering
    \includegraphics[width=0.45\textwidth]{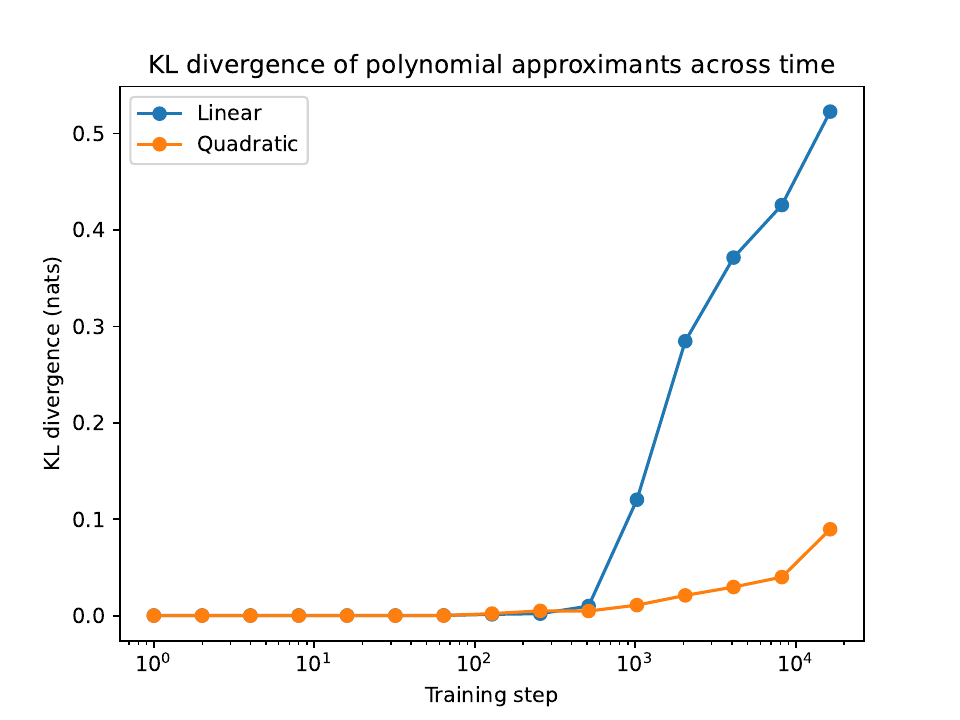}
    \caption{KL divergence of linear and quadratic approximants from the network on which they were fit, evaluated on a Gaussian mixture distribution imitating the MNIST training set. The trend mirrors the FVU plot (\cref{fig:fvu_comparison}) except that the KL does not decrease in the first few hundred steps before increasing.}
    \label{fig:kl_comparison}
\end{figure}


\subsection{Qualitative feature visualization}

To complement our quantitative analysis, we visualize the top eigenvector of the quadratic approximant for a random class, as well as the coefficients of the linear approximant corresponding to that class, over the course of an extra-long training run in \cref{fig:qualitative}. We see that linear and quadratic features are most intuitively interpretable between steps $2^{12}$ and $2^{14}$, after which the model starts to overfit.

\begin{figure*}[t]
    \centering
    \includegraphics[width=1.0\textwidth]{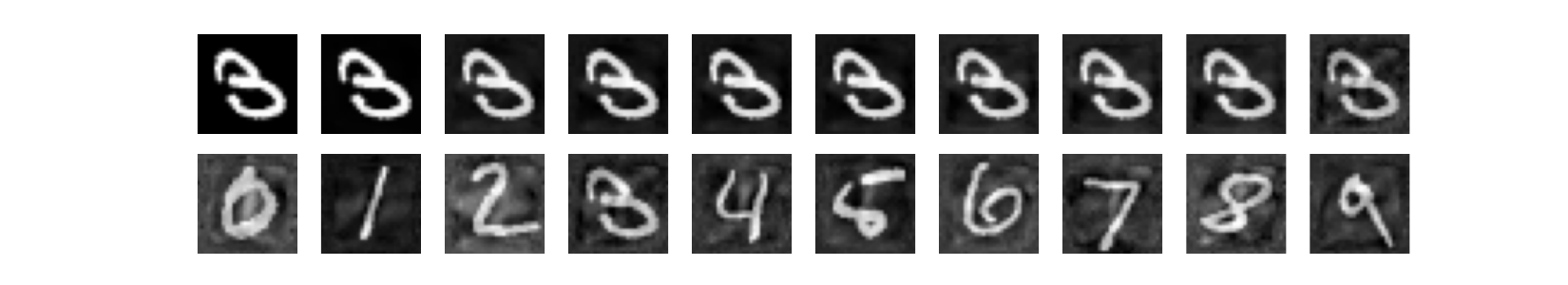}
    \caption{\textbf{Top row:} Adversarial `3' for different intervention strengths, ranging from one to ten SVD components ablated (left to right). \textbf{Bottom row:} Random examples of each digit with all ten SVD components ablated. Examples bear high resemblance to the original images, despite being unclassifiable by the MLP.}
    \label{fig:svd_attack_visualization}
\end{figure*}

\begin{figure}[h]
    \centering
    \includegraphics[width=0.45\textwidth]{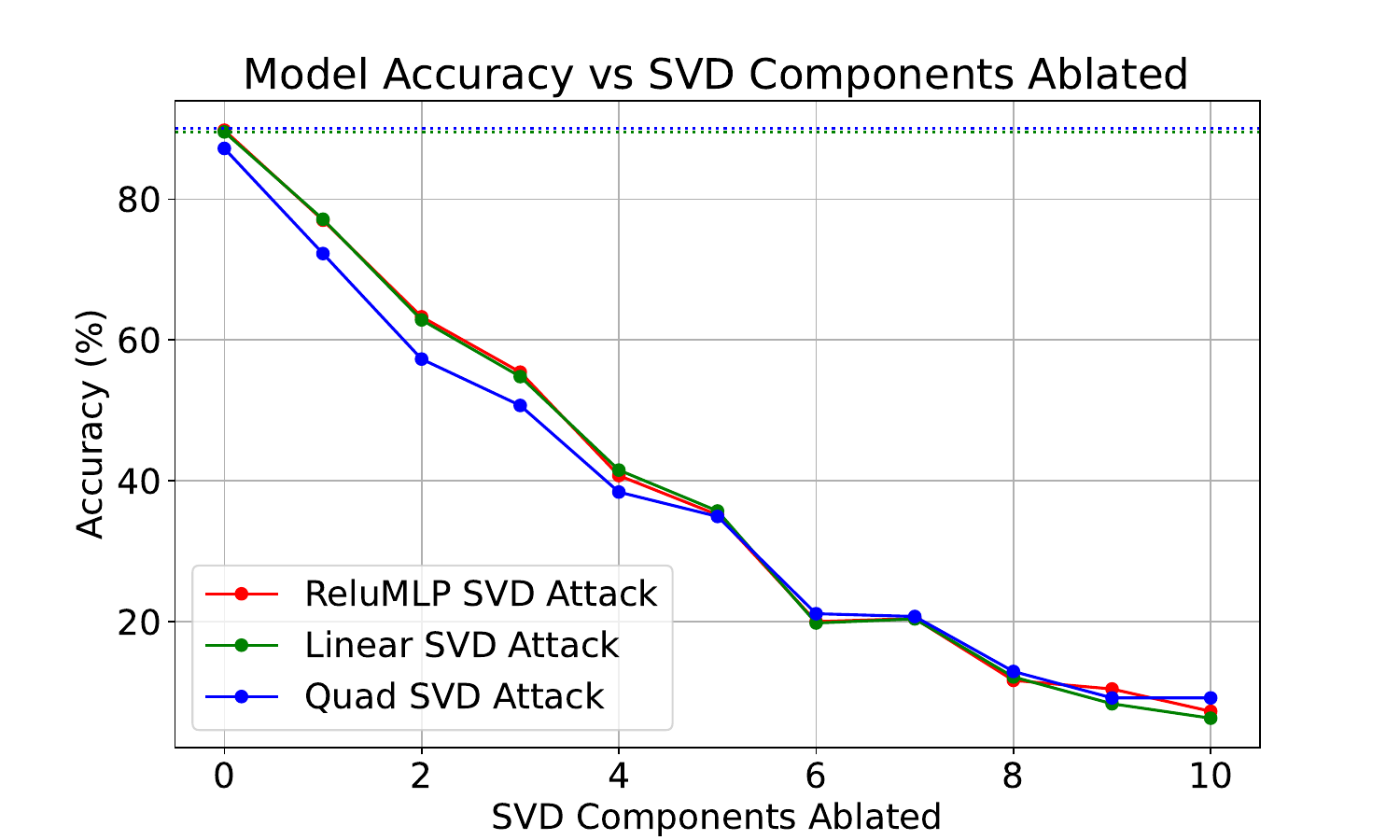}
    \caption{Steering with an adversarial mask. Strikingly, the original network's accuracy drops in perfect lockstep with that of its linear and quadratic approximants. Just four SVD components are needed to bring accuracy below 50\%, and ablating the entire rowspace (only 10 out of 784 total dimensions) sufficient to make the MLP perform no better than random chance. This indicates that the approximations are appropriately capturing the generalization behavior of the MLP.}
    \label{fig:7eigs}
\end{figure}

\subsection{Adversarial attacks}
Linear and quadratic networks can be decomposed into importance terms using SVD and eigendecomposition, then used to construct demonstrably effective adversarial attacks. If these attacks on polynomial approximants transfer to the ReLU network, it would provide evidence that our approximants capture the out-of-distribution behavior of the network. 

We formulate interventions as input space transformations, producing one transformation that gets uniformly applied to all examples in a test set. To measure the efficacy of an intervention, we compute the same quantitative measures as before, evaluating accuracy on the test set.

\paragraph{SVD attack.} We compute the singular value decomposition of the linear coefficients $$\beta = U \Sigma V^T= \sum_{i=1}^{10} \sigma_i u_i v^T$$ with $U: \mathbb{R}^\text{input} \to \mathbb{R}^{10}$ and $V: \mathbb{R}^{10} \to \mathbb{R}^{10}$.  We neutralize the top $k$ SVD components from the MLP, by applying an orthogonal projection $P_k$ to all its inputs: $$P_k = I - \sum_{i=1}^k u_i u_i^T$$

\paragraph{Results.}

In \cref{fig:7eigs}, we see that the ReLU model's performance drops in close lockstep with the approximants' performance, and after ablating just four SVD components, its accuracy is already under 50\%. Projecting the images onto the nullspace of the linear approximant (i.e. ablating all ten SVD components) brings the MLP to near-random performance, while the images are still human-intelligible (\cref{fig:svd_attack_visualization}, bottom row). This is a striking confirmation that our linear approximation captures features of the input that are causally important for the original model's performance.


\section{Conclusion}
\paragraph{Summary.} We derive analytic formulas for polynomial least-squares approximants for single hidden layer MLPs and gated linear units, given Gaussian inputs, for a variety of activation functions. We show empirically that these approximants can have surprisingly high $R^2$, with quadratics explaining well over 95\% of the variance in MLP outputs, when the model is trained on MNIST. Since the explained variance is so high, mechanistic insights from the approximations transfer to the ReLU network, which we demonstrate by constructing provably effective adversarial examples from features of the linear and quadratic approximations, and applying them to the original MLP.

\paragraph{Limitations.} Our analytic derivations make the assumption that the input has some Gaussian mixture distribution, and it is unclear if our derivation can be generalized in a useful way to other distributions. While we argue that the Gaussianity assumption is independently motivated, since it prevents the approximant from overfitting to the specific properties of a training dataset, it may not be suitable for all applications. In cases where Gaussianity is too restrictive, we recommend initializing a gradient-based optimizer with our analytic polynomial approximant, then finetuning on samples from the desired target distribution.

Computing the degree $n$ polynomial approximation for an MLP with input dimension $k$ and $d$ neurons involves materializing a tensor with $dk^n$ parameters. In a world where intermediate activations can have dimensionality in the tens of thousands, $n = 2$ is the highest that can feasibly be computed analytically. Therefore, if the best quadratic approximation for an MLP is poor, there is little that can be done to interpret the model using our approach in isolation. 


\paragraph{Future Work.} While we focus on simple image classification models in this paper, we think that our polynomial approximation schemes could be useful for interpreting feedforward modules in deep networks, like transformers. Performing eigendecomposition on a quadratic approximation to a transformer FFN provides us with an overcomplete basis of $d^2$ eigenvectors for the residual stream of dimension $d$. It is known that the singular vectors of FFN weight matrices can often be interpreted by projecting them into next-token prediction space using the unembedding matrix \citep{millidge2022singular}, so it is plausible that something similar could be done for these basis vectors. Since the basis would be overcomplete, however, it may enable us to overcome the polysemanticity problem in a similar way to sparse autoencoders \citep{huben2023sparse}. It may also turn out that more sophisticated tensor decomposition methods enable us to extract a more useful overcomplete basis than simple eigendecomposition.

\section*{Contributions and Acknowledgements}

Nora Belrose derived the analytic formulas for approximating MLPs with polynomial functions, and did most of the writing. Alice Rigg performed the adversarial attack experiments and generated Figure 1. Nora and Alice are funded by a \href{https://www.openphilanthropy.org/grants/eleuther-ai-interpretability-research/}{grant} from Open Philanthropy. We thank Coreweave for computing resources.

\section*{Code Availability}

Alongside this paper, we release a NumPy and PyTorch library for analytically computing polynomial approximants of MLPs and GLUs at \url{https://github.com/EleutherAI/polyapprox}.

\section*{Impact Statement}

This paper presents work whose goal is to advance the field of Interpretability. There are many potential societal consequences of our work, none which we feel must be specifically highlighted here.


\bibliography{citations}
\bibliographystyle{icml2025}

\newpage
\appendix
\onecolumn
\section{Integrals}

\subsection{Gaussian Linear Unit (GELU)}

Recall that the GELU activation function is defined as $x \Pphi(x)$, where $\Pphi$ denotes the standard normal CDF. This allows us to employ known results from the integral tables of \citet{owen1980table}.

\subsubsection{Mean}

Specifically we can make use of the identity
\begin{equation}
    \int_{-\infty}^{\infty} x \Pphi(a + bx) \varphi(x) dx = \frac{b}{\sqrt{1 + b^2}} \varphi \Big (\frac{a}{\sqrt{1 + b^2}} \Big )
\end{equation}
where $\varphi$ denotes the standard normal PDF, to show that
\begin{align}
    \E_x [\mathrm{GELU}(x)] &= \int_{-\infty}^{\infty} (\mu + \sigma z) \Pphi(\mu + \sigma z) \varphi(z) dz \\
    &= \mu \int_{-\infty}^{\infty} \Pphi(\mu + \sigma z) \varphi(z) dz + \sigma \int_{-\infty}^{\infty} z \Pphi(\mu + \sigma z) \varphi(z) dz \\
    &= \mu \Pphi \Big ( \frac{\mu}{\sqrt{1 + \sigma^2}} \Big ) + \frac{\sigma^2}{\sqrt{1 + \sigma^2}} \varphi \Big ( \frac{\mu}{\sqrt{1 + \sigma^2}} \Big ).
\end{align}

\subsubsection{Derivative}

By the product rule, the first derivative of GELU is
\begin{align}
    \frac{d}{dx} \mathrm{GELU}(x) &= \frac{d}{dx} \Big [ x \Pphi(x) \Big ] = \Pphi(x) + x \varphi(x)
\end{align}
The expected derivative under $\mathcal{N}(\mu, \sigma)$ is then
\begin{align*}
    \E_x [\mathrm{GELU}'(x)] &= \int_{-\infty}^{\infty} \Pphi(\mu + \sigma z) \varphi(z) dz + \int_{-\infty}^{\infty} (\mu + \sigma z) \varphi(\mu + \sigma z) \varphi(z) dz \\
    &= \Pphi \Big ( \frac{\mu}{\sqrt{1 + \sigma^2}} \Big ) + \mu \int_{-\infty}^{\infty} \varphi(\mu + \sigma z) \varphi(z) dz + \sigma \int_{-\infty}^{\infty} z \varphi(\mu + \sigma z) \varphi(z) dz \\
    &= \Pphi \Big ( \frac{\mu}{\sqrt{1 + \sigma^2}} \Big ) + \frac{\mu}{\sqrt{1 + \sigma^2}} \varphi \Big ( \frac{\mu}{\sqrt{1 + \sigma^2}} \Big ) - \sigma \varphi \Big ( \frac{\mu}{\sqrt{1 + \sigma^2}} \Big ) \frac{\mu \sigma}{(1 + \sigma^2)^{3/2}} \\
    &= \Pphi \Big ( \frac{\mu}{\sqrt{1 + \sigma^2}} \Big ) + \varphi \Big ( \frac{\mu}{\sqrt{1 + \sigma^2}} \Big ) \Big [ \frac{\mu}{\sqrt{1 + \sigma^2}} - \frac{\mu \sigma^2}{(1 + \sigma^2)^{3/2}} \Big ].
\end{align*}

\subsubsection{Higher-order moments}

We use the following identity from \citet{owen1980table}:
\begin{equation}\label{eq:gelu-nct}
    \int_{0}^{\infty} z^n \Pphi(a z + b) \varphi(z) dz = \frac{\Gamma(\frac{n + 1}{2}) 2^{(n - 1)/2}}{\sqrt{2 \pi}} F(a \sqrt{n + 1}; -b, n + 1)
\end{equation}
where $F(a \sqrt{n + 1}; -b, n + 1)$ is the cumulative distribution of the noncentral Student's $t$ distribution with $n + 1$ degrees of freedom and noncentrality parameter $-b$. The formula for this CDF is complex, but an efficient implementation is available in the SciPy \texttt{stats} module \href{https://docs.scipy.org/doc/scipy/tutorial/stats/continuous_nct.html}{here}. To convert Eq.~\ref{eq:gelu-nct} into an integral over the entire real line, we can use the identity
\begin{equation}\label{eq:pos-neg-int}
    \int_{-\infty}^{0} z^n \Pphi(a z + b) \varphi(z) dz = (-1)^n \int_{0}^{\infty} z^n \Pphi(-a z + b) \varphi(z) dz,
\end{equation}
and add together the positive and negative ``parts'' of the integral.

Finally, substituting $x = \sigma z + \mu$ and applying the binomial theorem, we have
\begin{align}
    \E_x \big [ x^n \Pphi(x) \big ] &= \int_{-\infty}^{\infty} (\sigma z + \mu)^n \Pphi(\sigma z + \mu) \varphi(z) dz\\
    &= \sum_{k = 0}^{n + 1} \binom{n + 1}{k} \mu^{n + 1 - k} \sigma^k \int_{-\infty}^{\infty} z^k \Pphi(\sigma z + \mu) \varphi(z) dz
\end{align}
We can now plug in Eqs.~\ref{eq:gelu-nct} and~\ref{eq:pos-neg-int} to solve for the expectation.

\subsection{Rectified Linear Unit (ReLU)}\label{app:relu}

The piecewise linear nature of ReLU makes the derivation of integrals involving it fairly straightforward. Essentially, we need to compute the probability of landing in the positive part of the ReLU's domain, then compute a simple Gaussian integral over this part of the domain.

\subsubsection{Mean}

We begin with the expectation $\E_x [\mathrm{ReLU}(x)]$ for $x \sim \mathcal{N}(\mu, \sigma)$. We first reparametrize $x = \mu + \sigma z$ where $z \sim \mathcal{N}(0, 1)$, and consider the equivalent expectation $\E_z [\mathrm{ReLU}(\mu + \sigma z)]$. Note that $\mathrm{ReLU}(x) > 0$ if and only if $z > -\frac{\mu}{\sigma}$. Now we may split the integral into two parts, corresponding to the positive and zero parts of the ReLU:
\begin{align*}
    \E_z [\mathrm{ReLU}(\mu + \sigma z)] &= \int_{-\frac{\mu}{\sigma}}^{\infty} (\mu + \sigma z) \varphi(z) dz + \cancel{\int_{-\infty}^{-\frac{\mu}{\sigma}} 0\: \varphi(z) dz} \\
    &= \mu \int_{-\frac{\mu}{\sigma}}^{\infty} \varphi(z) dz + \sigma  \int_{-\frac{\mu}{\sigma}}^{\infty} z \varphi(z) dz
\end{align*}
We can evaluate the first term using the standard normal CDF; by symmetry, it is simply $\Pphi ( \frac{\mu}{\sigma})$.

To evaluate the second term, we can use the fact that $\int_a^{\infty} z \varphi(z) dz = \varphi(a)$ for any $a$. To see this, note that
\begin{equation}
\frac{d}{dz} \varphi(z) = \frac{d}{dz} \Big ( \frac{1}{\sqrt{2 \pi}} \exp \big ( -\frac{z^2}{2} \big ) \Big ) = \frac{1}{\sqrt{2 \pi}} (-z) \exp \Big ( -\frac{z^2}{2} \Big ) = -z \varphi(z)
\end{equation}
and therefore
\begin{equation}\label{eq:linear-expectation}
    \int_a^{\infty} z \varphi(z) dz = -\int_a^{\infty} \frac{d}{dz} \varphi(z) dz = -\big [ \varphi(\infty) - \varphi(a) \big ] = \varphi(a).
\end{equation}
Applying this identity to our case yields $\varphi(-\frac{\mu}{\sigma})$, or by the symmetry of the standard normal PDF about zero, $\varphi(\frac{\mu}{\sigma})$.

Putting everything together, we have
\begin{equation}\label{eq:relu-expectation}
    \E_x [\mathrm{ReLU}(x)] = \mu \Pphi \Big ( \frac{\mu}{\sigma} \Big ) + \sigma \varphi \Big (\frac{\mu}{\sigma} \Big ).
\end{equation}

We can apply the above formula to evaluate
\begin{equation}
    \E_{\zz}[\mathrm{ReLU}(\mathbf{A}\zz + \mathbf{b})]_i = b_i \Pphi \Big ( \frac{b_i}{\| \mathbf{A}_i \|} \Big ) + \| \mathbf{A}_i \| \varphi \Big ( \frac{b_i}{\| \mathbf{A}_i \|} \Big )
\end{equation}
Let $\boldsymbol{s}$ denote the vector containing the Euclidean norms of the rows of $\mathbf{W}_1$. For a whole MLP it would then be
\begin{equation}
    \E_{\zz}[f(\zz)] = \mathbf{W}_2 \Big [ \mathbf{b}_1 \odot \Pphi (\mathbf{b}_1 \odot \boldsymbol{s}^{-1} ) + \boldsymbol{s} \odot \varphi ( \mathbf{b}_1 \odot \boldsymbol{s}^{-1} ) \Big ]
\end{equation}

\subsection{Higher-order moments}

The above derivation is special case of a more general formula for $\E[ x^n \mathrm{ReLU}(x) ]$ for any $n \geq 0$. Reparametrizing as before and applying the binomial theorem, we have
\begin{align}
    \E[ x^n \mathrm{ReLU}(x) ] &= \int_{-\infty}^{\infty} (\mu + \sigma z)^n \mathrm{ReLU}(\mu + \sigma z) \varphi(z) dz \\
    &= \sum_{k = 0}^{n + 1} \binom{n + 1}{k} \mu^{n + 1 - k} \sigma^k \int_{-\frac{\mu}{\sigma}}^\infty z^k \varphi(z) dz. \label{eq:relu-polynomial}
\end{align}
For $k \geq 2$, the integral $\int_{-\frac{\mu}{\sigma}}^\infty z^k \varphi(z) dz$ can be evaluated using the following recursion:
\begin{equation}
    \int_{-\frac{\mu}{\sigma}}^\infty z^k \varphi(z) dz = \Big ( \frac{\mu}{\sigma} \Big )^{k - 1} \varphi \Big ( \frac{\mu}{\sigma} \Big ) + (k - 1) \int_{-\frac{\mu}{\sigma}}^\infty z^{k - 2} \varphi(z) dz,
\end{equation}
where the value for $k = 0$ is simply $\Pphi(\frac{\mu}{\sigma})$, and the value for $k = 1$ is  $\varphi(\frac{\mu}{\sigma})$ (see Eq.~\ref{eq:linear-expectation}). The recursion is computationally efficient because Eq.~\ref{eq:relu-polynomial} makes use of every intermediate value from $k = 0$ to $k = n + 1$.

Alternatively, we can use the upper \href{https://en.wikipedia.org/wiki/Incomplete_gamma_function}{incomplete gamma function}, which is available in popular libraries like \href{https://docs.scipy.org/doc/scipy/reference/generated/scipy.special.gammainc.html}{SciPy}, \href{https://pytorch.org/docs/stable/special.html#torch.special.gammaincc}{PyTorch}, and \href{https://jax.readthedocs.io/en/latest/_autosummary/jax.scipy.special.gammainc.html}{JAX}. It is defined as $\Gamma(s, x) = \int_x^{\infty} t^{s - 1} \exp(-t) dt$. Specifically,
\begin{align}
    \int_{-\frac{\mu}{\sigma}}^\infty z^k \varphi(z) dz = \frac{1}{\sqrt{2 \pi}} \int_{-\frac{\mu}{\sigma}}^\infty z^k \exp \Big ( -\frac{z^2}{2} \Big ) dz = \frac{2^{k/2}}{\sqrt{2 \pi}} \Gamma \Big ( \frac{k + 1}{2}, -\frac{\mu^2}{2 \sigma^2} \Big ).
\end{align}
Putting it all together, we have
\begin{equation}
    \E[ x^n \mathrm{ReLU}(x) ] = \frac{1}{\sqrt{2 \pi}} \sum_{k = 0}^{n + 1} \binom{n + 1}{k} 2^{k/2} \mu^{n + 1 - k} \sigma^k \Gamma \Big ( \frac{k + 1}{2}, -\frac{\mu^2}{2 \sigma^2} \Big ).
\end{equation}

\section{Implementation details for quadratic approximants}\label{app:quadratic-approx}

Computing a quadratic approximant in closed form requires solving a linear system using the covariance matrix of the features $\zz = [\xx, \phi_2(\xx)]$, where $\phi_2$ is defined in Eq.~\ref{eq:quadratic-map}. The dimension of $\zz$ is precisely $\frac{d(d + 1)}{2}$, where $d$ is the dimensionality of $\xx$. Even for a simple dataset like MNIST, of dimension $28 \times 28 = 784$, this is already $\frac{784 \times 785}{2} = 307\:720$. The covariance matrix of $\zz$ thus has $9.47 \times 10^{10}$ entries, many more than can fit in 32-bit precision on an H100 graphics card. And since the time complexity of solving a linear system (with commonly used algorithms) scales cubically in the number of rows in the matrix, the overall complexity is $O(d^6)$. This is intractable, unless $\mathrm{Cov}(\zz)$ has special structure-- which it does, if $\xx \sim \mathcal{N}(0, 1)$. In that case, $\mathrm{Cov}(\zz)$ is a diagonal matrix with entries are in $\{1, 2\}$ according to a simple pattern.

For this reason, we chose to initialize our quadratic approximants with coefficients computed under the assumption that $\xx$ is a standard Gaussian vector. We then finetuned them using schedule-free SGD \citep{defazio2024road} on batches of samples from a Gaussian mixture distribution whose means and covariance matrices matched those of the classes in MNIST. Since the objective is convex, the resulting coefficients should be excellent approximations of the true least-squares values.

\end{document}